\documentclass[conference]{IEEEtran}

\IEEEoverridecommandlockouts

\usepackage{amsmath,amssymb,amsfonts,amstext}
\usepackage{bm}
\usepackage{graphicx}
\usepackage[ruled,linesnumbered,noend]{algorithm2e}
\usepackage{array}
\usepackage{multirow}
\usepackage{booktabs}
\usepackage{url}
\usepackage{hyperref}
\usepackage{lipsum}

\usepackage{caption}
\usepackage{subcaption}
\usepackage[table,xcdraw]{xcolor}

\usepackage{siunitx}
\usepackage{microtype}

\usepackage{amsthm}
\newtheorem{theorem}{Theorem}

\makeatletter
\patchcmd{\@algocf@start}{-1.5em}{0pt}{}{}
\makeatother
\SetArgSty{textup}
\SetKw{Continue}{continue}
\SetKw{Break}{break}

\AtBeginDocument{%
  \setlength{\abovedisplayskip}{4pt plus 2pt minus 1pt}%
  \setlength{\belowdisplayskip}{4pt plus 2pt minus 1pt}%
  \setlength{\abovedisplayshortskip}{0pt plus 2pt}%
  \setlength{\belowdisplayshortskip}{4pt plus 2pt minus 1pt}%
}

\usepackage{xparse}
\let\originalleft\left
\let\originalright\right
\renewcommand{\left}{\mathopen{}\mathclose\bgroup\originalleft}
\renewcommand{\right}{\aftergroup\egroup\originalright}
\NewDocumentCommand\Set{m}{ \left\{#1\right\} }
\NewDocumentCommand\Real{}{ \mathbb{R} }
\NewDocumentCommand\Sym{}{ \mathbb{S} }
\NewDocumentCommand\PDMatrices{m}{\Sym^{#1}_{++}}
\NewDocumentCommand\T{}{\mathsf{T}}
\NewDocumentCommand\Vector{m}{ \boldsymbol{\mathbf{#1}} }
\NewDocumentCommand\Matrix{m}{ \bm{\mathbf{#1}} }
\NewDocumentCommand\Transpose{m}{ \left.{#1}\right.^\T }
\NewDocumentCommand\Inv{m}{{#1}^{-1}}
\NewDocumentCommand\Norm{m}{ \left\Vert#1\right\Vert }
\NewDocumentCommand\Zero{}{ \Matrix{0} }
\NewDocumentCommand\Identity{}{ \Matrix{I} }
\NewDocumentCommand\dd{}{ \mathop{}\!\mathrm{d} }
\NewDocumentCommand\ArgMin{m}{ \operatorname*{argmin}_{#1} }
\newcommand{\R}{\mathcal{R}}
\newcommand{\M}{\mathcal{M}}
\newcommand{\Cfg}{\mathcal{Q}}
\newcommand{\TqM}{\mathcal{T}_{q}\M}
\newcommand{\TqQ}{\mathcal{T}_{q}\Cfg}
\newcommand{\NqQ}{\mathcal{N}_{q}\Cfg}
\newcommand{\D}{\mathrm{D}}
\newcommand{\Metric}{\Matrix{G}}
\newcommand{\MetricAt}[1]{\Matrix{G}_{#1}}
\newcommand{\inner}[3]{\left\langle #1, #2 \right\rangle_{#3}}
\newcommand{\Constraint}{h}
\newcommand{\DConstraint}[1]{\D \Constraint(#1)}
\newcommand{\Dh}{\Matrix{H}}
\newcommand{\Kernel}[1]{\ker #1}
\newcommand{\Image}[1]{\operatorname{im} #1}
\newcommand{\Pseudo}[1]{{#1}^{\dagger}}
\newcommand{\KernelBasis}{\Matrix{B}}
\newcommand{\Chart}{\varphi}
\newcommand{\DChart}[1]{\D \Chart(#1)}
\newcommand{\Dphi}{\Matrix{\Phi}}
\newcommand{\ChartDomain}{\mathcal{U}}
\newcommand{\PullbackMetric}{g}
\newcommand{\PullbackMetricAt}[1]{g(#1)}
\newcommand{\RestrictedMetric}[1]{\Metric_{\Cfg}(#1)}
\newcommand{\qmid}{q_{\text{mid}}}
\newcommand{\Qfree}{\Cfg_{\text{free}}}
\newcommand{\Qobs}{\Cfg_{\text{obs}}}
\newcommand{\Qgoal}{\Cfg_{\text{goal}}}
\newcommand{\qstart}{q_{\text{start}}}

\title{\LARGE \bf Induced Riemannian Metrics for Motion Planning with Constraints}

\author{
Phone Thiha Kyaw$^1{}$, Thomas Cohn$^2{}$, Miguel Angel Rogel Garcia$^1{}$ and Jonathan Kelly$^1{}$
\thanks{$^{1}$Authors are with the Space \& Terrestrial Autonomous Robotic Systems (STARS) Laboratory at the University of Toronto Institute for Aerospace Studies (UTIAS), Toronto, Ontario, Canada, M3H~5T6. Email: {\tt\small <first name>.<last name>@robotics.utias.utoronto.ca}}
\thanks{$^{2}$Authors are with the Robot Locomotion Group (RLG) at the Massachusetts Institute of Technology Computer Science and Artificial Intelligence Laboratory (MIT CSAIL), Cambridge, Massachusetts, United States, 02141. Email: {\tt\footnotesize tcohn@mit.edu}}
}

\begin{document}
\maketitle
\thispagestyle{empty}
\pagestyle{empty}

\begin{abstract}
\looseness=-1
In constrained motion planning problems, task and loop-closure
constraints restrict a robot's motion to a curved, lower-dimensional
submanifold of its configuration space.
Planners measure path length with a metric, which sets the cost of
moving in each direction.
Under the Euclidean metric, this cost is the same everywhere, whereas
under a general Riemannian metric, such as the kinetic-energy metric,
the cost can vary with direction and configuration.
Existing methods often describe the submanifold either implicitly, as
a constraint level set, or explicitly, through a parameterization.
The implicit representation is typically combined with the Euclidean
metric of the configuration space, and the explicit representation
with the parameter domain, so the path length that a planner
minimizes depends on the representation.
Instead, we measure path length with the induced metric, which the
submanifold inherits from a Riemannian metric on the configuration
space.
The implicit and explicit representations yield the same induced
metric, expressed in different coordinates, and hence the same
geometry.
This result holds for any Riemannian metric on the configuration space,
not only the Euclidean one.
The choice of metric is therefore independent of the choice of
representation.
Using this result, we extend planning under a Riemannian metric from
unconstrained spaces to constraint submanifolds by applying the induced
metric in both a sampling-based planner and a trajectory optimizer.
For an explicit representation, the induced metric also accounts for
the distortion that the parameterization introduces.
In experiments on a bimanual manipulation setup with two Franka arms
under end-effector task constraints, we compare the Euclidean and
kinetic-energy metrics.
\end{abstract}

\section{Introduction}
\label{sec:introduction}

Robotic motion planning is often posed as a search for collision-free paths through a configuration space.
Many practical tasks impose additional equality constraints on this search.
For example, holding a tool level, keeping an end effector on a surface, or closing a kinematic loop between two arms restricts the robot to a lower-dimensional subset of feasible configurations.
Such constraints define a smooth submanifold of the configuration space, and a feasible motion must remain on this submanifold throughout~\cite{kingston2018sampling,cohn2024constrained}.
Because the submanifold is generally curved, the cost of moving between two configurations is governed by its intrinsic geometry rather than by distances in the ambient space.

\begin{figure}[!t]
\centering
\includegraphics[width=\columnwidth]{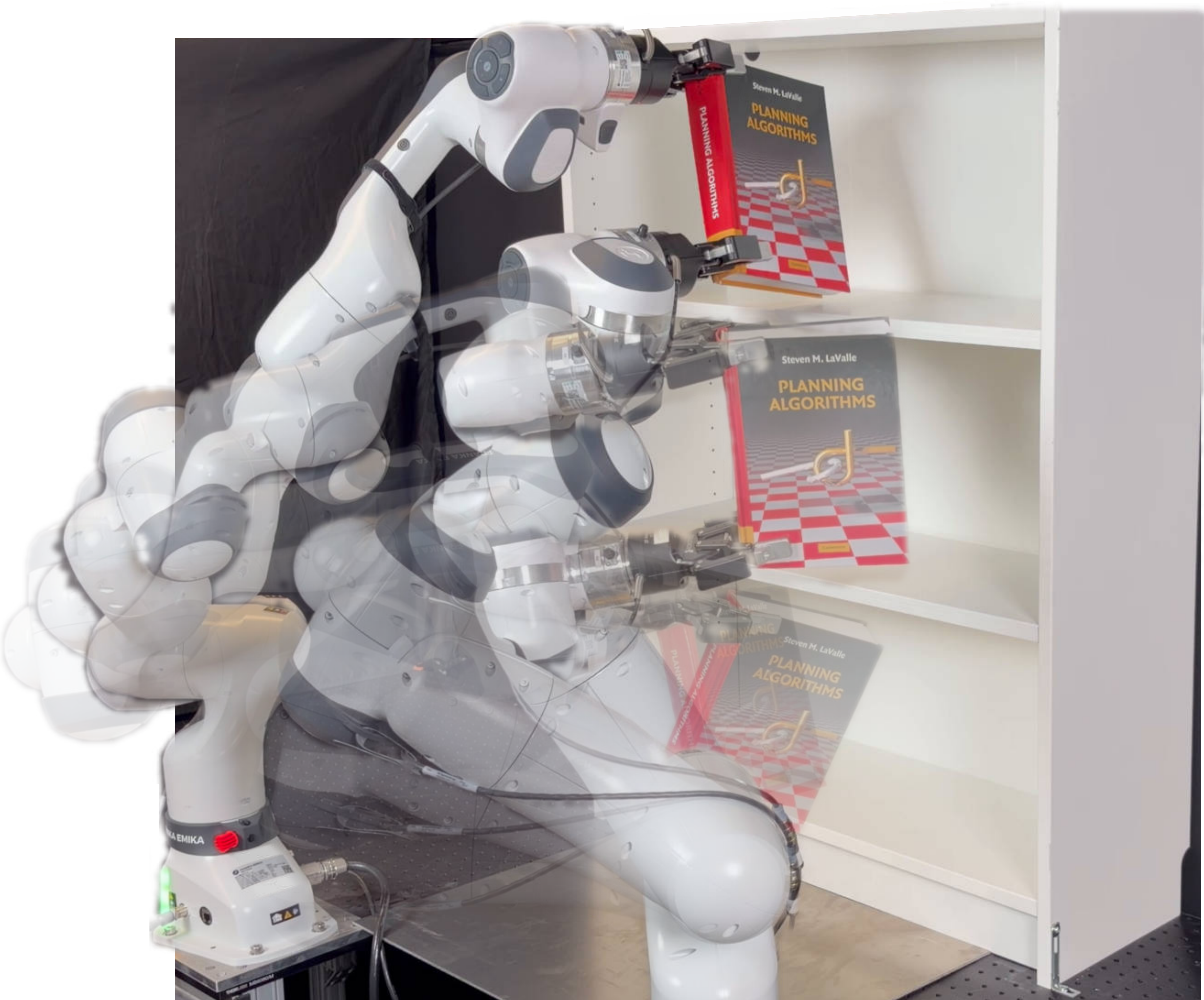}
\caption{
Two Franka arms executing a constraint-satisfying motion plan under the kinetic-energy Riemannian metric to transfer a book between shelf compartments in our bimanual experimental setup.
}
\label{fig:teaser}
\vspace{-5mm}
\end{figure}

\looseness=-1
The cost of moving between configurations also depends on the chosen  optimization objective.
For example, rotating the shoulder joints of a manipulator through a
given angle takes more energy than rotating the wrist through the same
angle, because the shoulder joints move most of the arm.
The path that is shortest in joint space, measured with the Euclidean
metric, is therefore not necessarily the path that uses the least
energy.
However, a shortest-path planner can still minimize a different cost
if length is measured with a metric under which shorter paths have
lower cost.
A Riemannian metric allows us to measure length in this way.
It assigns every configuration an inner product that varies smoothly across the space, and this inner product sets the cost of moving in each direction~\cite{lee2018introduction}.
Many planners and controllers already use such metrics built for the task, with the mass matrix turning joint-space length into kinetic energy~\cite{bullo2005geometric,jaquier2022riemannian}, and with other metrics encoding collision safety or the behaviour of a controller~\cite{ratliff2018riemannian,cheng2021rmpflow,klein2023design}.
Recent work described how sampling-based planners can plan through Euclidean configuration spaces with arbitrary Riemannian metrics~\cite{kyaw2026geometry,kyaw2026direct}.
We extend those works to constraint submanifolds, where feasible motion is measured by the metric that the submanifold inherits from the configuration space.

\looseness=-1
Constraint submanifolds are represented by constrained motion planners either implicitly, as the zero level set of a constraint map~\cite{berenson2009manipulation,jaillet2012path,kim2016tangent,kingston2018sampling}, or explicitly, through a parameterization that satisfies the constraint by construction~\cite{cohn2024constrained,cohn2026planning}. 
Kingston et al. unify the planners built on the implicit representation by decoupling the choice of planner from the method used to satisfy the constraint, though the resulting planners often assume a Euclidean metric on the ambient space~\cite{kingston2019exploring}.
Garg et al. observe that a nonlinear parameterization distorts distances, and use nonconvex optimization to locally correct for the distortion, without identifying the correction as the metric that the submanifold induces~\cite{garg2025planning}.
However, neither method explicitly reasoned about the underlying geometry of the constraint manifold or the impact of non-Euclidean ambient metrics.

In this work, we derive the geometry that the constraint submanifold inherits for both the implicit and explicit representations, and prove that both yield the same induced metric for any ambient Riemannian metric.
The two therefore result in the same lengths, distances, and geodesics on the submanifold, under both constant and smoothly varying ambient metrics.
We then apply this induced metric inside a sampling-based planner and a trajectory optimizer, so that both minimize the same Riemannian length on the constraint submanifold.

Our main contributions are as follows.
\begin{itemize}
\item We prove that the implicit and explicit representations of a constraint manifold induce the same metric, related by an invertible change of basis, for any Riemannian metric on the ambient space.
\item We show how this induced metric is used by a sampling-based planner and by a trajectory optimizer on the constraint manifold, so that both solvers minimize the same Riemannian length functional.
\item We demonstrate our approach in simulation and on a real bimanual setup with two Franka arms under a loop-closure constraint, comparing the Euclidean and the kinetic-energy metric in both representations.
\end{itemize}
\section{Related Work}
\label{sec:related_work}

\looseness=-1
Planning under equality constraints is a search over a constraint manifold, a lower-dimensional subset of the configuration space on which the constraint holds~\cite{kingston2018sampling}.
Because this manifold has measure zero in the original space, uniform sampling almost surely returns infeasible configurations, yet a sampling-based planner has to generate both its samples and motions on the manifold itself.
Projection-based methods restore feasibility with a numerical procedure (e.g. Jacobian pseudoinverse)~\cite{berenson2009manipulation,berenson2011task,stilman2010global,iyer2026vectorizing}, whereas continuation-based methods cover the manifold with local piecewise-linear approximations of the tangent space and grow the tree within this atlas of charts~\cite{jaillet2012path,kim2016tangent,bordalba2020randomized}.
Kingston et al. unify projection and continuation by decoupling the choice of planner from the method used to satisfy the constraint, while preserving probabilistic completeness and asymptotic optimality~\cite{kingston2019exploring}.
All of them describe the constraint manifold implicitly, and their sampling, local planning, and cost evaluation subroutines often measure motion with the Euclidean metric of the ambient configuration space.

A second approach parameterizes the constraint manifold directly, so that the valid configurations have positive measure in the parameter domain and a planner can sample them without requiring any projection~\cite{mirabel2018handling,cohn2024constrained}.
Inverse kinematics has long been used to parameterize constraint manifolds~\cite{han2001kinematics,cortes2005sampling}.
More recently, this approach enabled optimization-based planning algorithms to use parameterizations, with gradients from automatic differentiation~\cite{cohn2024constrained} or the inverse function theorem~\cite{cohn2026planning}.
Optimization over these parameterizations has been developed for graphs of convex sets, which recovers
exact geodesics by restricting to flat manifolds where each chart is a local isometry~\cite{cohn2025non}.
Garg et al. observe that a nonlinear parameterization distorts distances, so that a convex objective on the parameter domain yields paths that are suboptimal in the original space, and they correct the objective accordingly~\cite{garg2025planning}.
In contrast, we identify this correction as the pullback of the ambient metric, and prove that it agrees with the restriction of that metric to the tangent spaces of the constraint manifold.

\looseness=-1
A different class of methods starts from the metric that determines the geometry of the ambient space rather than from the constraint that defines the manifold.
Here, treating the configuration space as a Riemannian manifold makes the cost of a motion a property of that geometry rather than of the chosen coordinates.
The mass matrix, for example, defines a kinetic-energy Riemannian metric that measures a motion by the effort needed to execute it~\cite{bullo2005geometric,jaquier2022riemannian}, while other methods fuse several motion policies by combining the metrics that define them~\cite{ratliff2018riemannian,cheng2021rmpflow}, or reshape a metric so that shortest paths stay away from obstacles and singularities~\cite{klein2023design}.
Sampling-based planners have been extended to such general non-Euclidean metrics~\cite{lukyanenko2023probabilistic}, and recent work plans geodesics and samples informed sets under configuration-dependent metrics on unconstrained manifolds~\cite{kyaw2026geometry,kyaw2026direct}.
Each of these metrics is defined on a manifold that the robot moves on freely, whereas we take the ambient metric as given and work with the geometry that a constraint manifold inherits from it.

Trajectory optimization methods on constraint manifolds either penalize constraint violation or eliminate the constraint through a representation of the manifold~\cite{ratliff2009chomp,kalakrishnan2011stomp,schulman2014motion,toussaint2014newton}.
Bonalli et al. instead lift the problem into an equivalent one defined over a space enjoying a Euclidean structure, and handle both the implicit and the explicit representation~\cite{bonalli2019trajectory}.
Each of these optimizers minimizes a Euclidean length or smoothness functional on a chart or parameter domain, which differs from the length that the constraint manifold induces.
In contrast, we minimize this induced length directly, using the same metric in both a sampling-based planner and a trajectory optimizer.
\section{Preliminaries}
\label{sec:preliminaries}

\looseness=-1
This section reviews the geometric tools we use in the rest of the paper and states the constrained planning problem.
We keep the treatment brief and refer the reader to the standard texts for the full theory~\cite{lee2018introduction}.

\subsection{Riemannian Manifolds and Metrics}
\label{subsec:riemannian}

\looseness=-1
Let $\M$ be an $n$-dimensional configuration manifold of the robot, embedded in a linear space (we assume $\M$ is an open subset of $\Real^{n}$).
The tangent space $\TqM$ at a point $q \in \M$ collects the velocity vectors of all smooth curves on $\M$ that pass through $q$.
Every configuration has its own tangent space, and the collection of all such tangent spaces forms the tangent bundle $\mathcal{T}\M = \Set{ (q, v) : q \in \M, \, v \in \TqM }$.
We work in ambient coordinates throughout, so that a tangent vector can be identified as a vector in $\Real^{n}$.
A Riemannian metric $\MetricAt{q}$ on $\M$ assigns each of these tangent spaces a smoothly varying inner product
\begin{equation}
\label{eqn:inner-product}
\inner{u}{v}{q} = \Transpose{u}\,\MetricAt{q}\,v, \qquad \MetricAt{q} \in \PDMatrices{n},
\end{equation}
where $\MetricAt{q}$ is symmetric positive definite and induces the norm $\Norm{v}_{q} = \inner{v}{v}{q}^{1/2}$.
The metric induces a notion of length for curves and hence a distance function, thereby turning $\M$ into a metric space.
For a piecewise smooth curve $\gamma : [0,1] \to \M$, its length is
\begin{equation}
\label{eqn:arc-length}
L[\gamma] = \int_{0}^{1} \Norm{\dot\gamma(t)}_{\gamma(t)} \, \dd t
= \int_{0}^{1} \sqrt{ \Transpose{\dot\gamma(t)}\,\MetricAt{\gamma(t)}\,\dot\gamma(t) } \, \dd t,
\end{equation}
and the infimum of this length over all curves joining $q_x$ to $q_y$ is the \textit{Riemannian distance} $d(q_x, q_y)$.
Geodesics are the curves that locally minimize this distance, the curved-space counterpart of straight lines.
Between fixed endpoints, they are equivalently the critical curves of the energy functional
\begin{equation}
\label{eqn:energy}
E[\gamma] = \frac{1}{2} \int_{0}^{1} \Transpose{\dot\gamma(t)}\,\MetricAt{\gamma(t)}\,\dot\gamma(t) \, \dd t .
\end{equation}
Note that the length functional in~\eqref{eqn:arc-length} is invariant under reparameterization.
More importantly for our purposes, it is also invariant under a change of coordinates, since the metric transforms as a $(0,2)$-tensor.
We will exploit precisely this transformation law in Section~\ref{subsec:equivalence}.

\subsection{Constraint Submanifolds}
\label{subsec:constraint-submanifolds}

We encode task and loop-closure constraints with a smooth constraint map $\Constraint : \M \to \Real^{k}$, where $k < n$.
The configurations that satisfy the constraint form its zero level set
\begin{equation}
\label{eqn:submanifold}
\Cfg = \Set{ q \in \M : \Constraint(q) = \Vector{0} }.
\end{equation}
We assume throughout that the constraint Jacobian $\DConstraint{q} \in \Real^{k \times n}$ has full row rank everywhere on $\Cfg$, so that $\Vector{0}$ is a regular value of $\Constraint$.
Configurations where this rank condition fails are singular, and we exclude them from $\mathcal{Q}$.
The set $\Cfg$ is then a smooth embedded submanifold of $\M$ of dimension $d = n - k$, and its tangent space at $q$ is the kernel of the constraint Jacobian~\cite{lee2012smooth},
\begin{equation}
\label{eqn:tangent-kernel}
\TqQ = \Kernel{\DConstraint{q}} = \Set{ v \in \TqM : \DConstraint{q}\,v = \Vector{0} }.
\end{equation}
The feasible velocities are exactly the velocities that keep the constraint satisfied to first order, since differentiating $\Constraint(\gamma(t)) \equiv \Vector{0}$ along a curve in $\Cfg$ gives $\DConstraint{q}\,\dot\gamma = \Vector{0}$.
Restricting the ambient metric $\Metric$ of $\Cfg$ to the tangent spaces $\TqQ$ at every $q \in \Cfg$ makes $\Cfg$ a Riemannian manifold in its own right, and the resulting induced metric determines the cost of feasible motion.
We construct this metric in Section~\ref{sec:method}.

\subsection{Problem Formulation}
\label{subsec:problem}

Let $\Qobs \subsetneq \Cfg$ be the configurations in collision with obstacles, and let $\Qfree = \Cfg \setminus \Qobs$ be the collision-free part of the configuration manifold.
Given a start configuration $q_{\text{start}} \in \Qfree$ and a set of goal configurations $\Qgoal \subseteq \Qfree$, we seek a feasible path that minimizes the Riemannian length under the induced metric,
\begin{equation}
\label{eqn:problem}
\begin{split}
\gamma^{*} = \ArgMin{\gamma \in \Sigma} \Big\{ L[\gamma] \;\Big|\;
& \gamma(0) = \qstart,\ \gamma(1) \in \Qgoal, \\
& \gamma(t) \in \Qfree,\ \forall t \in [0,1] \Big\},
\end{split}
\end{equation}
where $\Sigma$ is the set of piecewise smooth paths.
We assume throughout that this minimum is attained, which holds whenever $\Qfree$ is compact.
Because $\Cfg$ is generally curved, the solution of~\eqref{eqn:problem} is not a straight line in the ambient coordinates of $\M$.
A planner that measures distance in those coordinates therefore optimizes the wrong quantity.
Its subroutines must follow the induced geometry of $\Cfg$ instead, which we develop next.
\section{Induced Geometry of Constraint Submanifolds}
\label{sec:method}

\looseness=-1
This section constructs the metric that the constraint submanifold $\Cfg$ inherits from the configuration manifold $\M$.
We derive this metric twice, first from the implicit constraint level set (Section~\ref{subsec:implicit}) and then from an explicit parameterization (Section~\ref{subsec:explicit}).
We then prove that the two representations are equivalent for any ambient Riemannian metric (Section~\ref{subsec:equivalence}).

\subsection{Implicit Representation}
\label{subsec:implicit}

\looseness=-1
In the implicit representation, we describe $\Cfg$ as the zero level set~\eqref{eqn:submanifold} of the constraint map and obtain its geometry from the constraint Jacobian $\Dh = \DConstraint{q} \in \Real^{k \times n}$.
From~\eqref{eqn:tangent-kernel}, the tangent space is the kernel $\TqQ = \Kernel{\Dh}$, and its orthogonal complement with respect to the Euclidean inner product is the normal space $\NqQ = \Image{\Transpose{\Dh}}$.
We use the normal space to return to $\Cfg$ and the tangent space to move along it.
Since operations in the ambient space generally do not satisfy the constraint, we project them back onto $\Cfg$, as is commonly done in constrained sampling-based planning~\cite{berenson2009manipulation,kingston2018sampling}.
Linearizing the constraint about $q$ and taking the correction of minimum Euclidean norm, which lies in $\NqQ$, gives the update
\begin{equation}
\label{eqn:projection}
q \leftarrow q - \Pseudo{\Dh}\,\Constraint(q), \qquad
\Pseudo{\Dh} = \Transpose{\Dh}\bigl( \Dh\,\Transpose{\Dh} + \lambda \Identity \bigr)^{-1}.
\end{equation}
Here, $\Pseudo{\Dh}$ is the damped pseudoinverse and $\lambda \ge 0$ is a Tikhonov regularization that keeps it well conditioned when $\Dh$ loses rank.
Iterating~\eqref{eqn:projection} to convergence defines the projection operator $\Pi_{\Cfg}$, which maps an ambient configuration to a nearby one on $\Cfg$.\footnote{The correction in~\eqref{eqn:projection} is normal to $\TqQ$ in the Euclidean sense, not in the metric $\MetricAt{q}$, whose normal space is $\Image{\Inv{\MetricAt{q}}\Transpose{\Dh}}$. The retraction~\eqref{eqn:implicit-retraction} is therefore second order under the Euclidean metric and first order in general~\cite{absil2012projection}.}
To move along $\Cfg$, we take a tangent step in the ambient space and project the result back, which gives a valid retraction on $\Cfg$,
\begin{equation}
\label{eqn:implicit-retraction}
\R_{q} : \TqQ \to \Cfg, \qquad \R_{q}(v) = \Pi_{\Cfg}(q + v),
\end{equation}
satisfying $\R_{q}(0) = q$ and $\D\R_{q}(0)[v] = v$~\cite{boumal2023introduction}.

\looseness=-1
We now restrict the ambient metric $\MetricAt{q}$ to the tangent space, turning $\Cfg$ into a Riemannian manifold in its own right.
Let $\KernelBasis \in \Real^{n \times d}$ be an orthonormal basis of $\Kernel{\Dh}$, that is, a matrix whose $d$ columns span $\TqQ$ and satisfy $\Transpose{\KernelBasis}\KernelBasis = \Identity_{d}$, obtained in practice from a singular value decomposition of $\Dh$.
A tangent vector is expressed in this basis as $v = \KernelBasis\,\xi$ with reduced coordinates $\xi \in \Real^{d}$, and its squared length under the ambient metric is $\Transpose{\xi}\bigl(\Transpose{\KernelBasis}\,\MetricAt{q}\,\KernelBasis\bigr)\xi$.
Therefore the induced metric on $\Cfg$, expressed in the orthonormal kernel basis, is the $d \times d$ symmetric positive definite matrix
\begin{equation}
\label{eqn:restricted-metric}
\RestrictedMetric{q} = \Transpose{\KernelBasis}\,\MetricAt{q}\,\KernelBasis \in \PDMatrices{d}.
\end{equation}
Positive definiteness follows from that of $\MetricAt{q}$, and the induced metric varies over $\Cfg$ through both $\MetricAt{q}$ and the basis $\KernelBasis$, which itself changes with $q$.

\subsection{Explicit Parameterization}
\label{subsec:explicit}

\looseness=-1
In the explicit approach, we describe $\Cfg$ locally as the image of a parameterization\footnote{Following~\cite{lee2012smooth,cohn2026planning}, we call $\Chart$ a parameterization and reserve \emph{chart} for its inverse, which maps $\Cfg$ into $\Real^{d}$; much of the constrained planning literature instead calls a local parameterization of this kind a chart~\cite{jaillet2012path,kingston2019exploring}.} $\Chart : \ChartDomain \subseteq \Real^{d} \to \Cfg$ and obtain its geometry from the differential $\Dphi = \DChart{u} \in \Real^{n \times d}$.
This parameterization satisfies the constraint by construction, $\Constraint(\Chart(u)) \equiv \Vector{0}$, so every parameter $u$ in the open domain $\ChartDomain$ maps to a configuration on $\Cfg$.
Unlike in the implicit representation, where the feasible set has measure zero in $\M$, we can directly sample $\ChartDomain$ to generate feasible configurations without requiring any projection.
Differentiating $\Constraint(\Chart(u)) \equiv \Vector{0}$ gives $\Dh\,\Dphi = \Zero$, so the columns of $\Dphi$ lie in $\Kernel{\Dh}$.
Since $\Dphi$ has full column rank, $\Image{\Dphi}$ and $\Kernel{\Dh}$ both have dimension $d$, so the two subspaces coincide,
\begin{equation}
\label{eqn:chart-tangency}
\Image{\Dphi} = \Kernel{\Dh} = \TqQ, \qquad q = \Chart(u).
\end{equation}

\looseness=-1
We now pull the ambient metric $\MetricAt{q}$ back to the parameter domain.
The differential pushes a 
parameter velocity $\dot u \in \mathcal{T}_{u}\ChartDomain = \Real^{d}$ forward to the tangent vector $\Dphi\,\dot u \in \TqQ$, whose squared length under the ambient metric is $\Transpose{\dot u}\bigl(\Transpose{\Dphi}\,\MetricAt{q}\,\Dphi\bigr)\dot u$.
Factoring out $\dot u$, the induced metric on $\Cfg$, expressed in parameter coordinates, is the $d \times d$ symmetric positive definite matrix
\begin{equation}
\label{eqn:pullback-metric}
\PullbackMetricAt{u} = \Transpose{\Dphi}\,\MetricAt{q}\,\Dphi \in \PDMatrices{d}, \qquad q = \Chart(u).
\end{equation}
Curves in the parameter domain measured under $\PullbackMetric$ therefore have exactly their true length on $\Cfg$, since a parameter curve $c$ and its image $\gamma = \Chart \circ c$ have velocities related by $\dot\gamma = \Dphi\,\dot c$ and hence equal speeds under the two metrics.
Treating the parameter domain as flat instead discards~\eqref{eqn:pullback-metric} and distorts distances~\cite{garg2025planning}.
In practice, the parameterization itself may be available in closed form or, for an articulated robot, may come from an analytic inverse-kinematics solution~\cite{cohn2024constrained}, whose differential follows from the forward kinematics by the inverse function theorem~\cite{cohn2026planning}.
We use such a parameterization for the bimanual task in Section~\ref{sec:experiments}.

\subsection{Equivalence of the Two Representations}
\label{subsec:equivalence}

The two constructions above write one metric in two different bases.
By~\eqref{eqn:chart-tangency}, the columns of the differential $\Dphi$ and the columns of the kernel basis $\KernelBasis$ both span the same $d$-dimensional subspace $\TqQ$.
Since any two bases of one subspace are related by an invertible change of basis, we have
\begin{equation}
\label{eqn:change-of-basis}
\Matrix{R} = \Transpose{\KernelBasis}\,\Dphi \in \Real^{d \times d}, \qquad \Dphi = \KernelBasis\,\Matrix{R},
\end{equation}
where the formula for $\Matrix{R}$ follows by multiplying $\Dphi = \KernelBasis\,\Matrix{R}$ on the left by $\Transpose{\KernelBasis}$ and using $\Transpose{\KernelBasis}\KernelBasis = \Identity_{d}$.
The following result shows that this one matrix relates the two induced metrics.

\begin{theorem}[Equivalence of induced metrics]
\label{thm:equivalence}
Let $q\!=\!\Chart(u)\!\in\!\Cfg$, let $\KernelBasis$ be an orthonormal basis of $\Kernel{\Dh}$, and let $\Matrix{R} = \Transpose{\KernelBasis}\,\Dphi$.
Then $\Matrix{R}$ is invertible and the pullback metric~\eqref{eqn:pullback-metric} and the induced metric~\eqref{eqn:restricted-metric} are equivalent,
\begin{equation}
\label{eqn:equivalence}
\PullbackMetricAt{u} = \Transpose{\Matrix{R}}\RestrictedMetric{q}\,\Matrix{R}.
\end{equation}
The two therefore define the same inner product on $\TqQ$, and hence the same lengths, angles, and geodesics on $\Cfg$.
\end{theorem}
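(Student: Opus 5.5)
The plan is to reduce everything to the identity $\Dphi = \KernelBasis\,\Matrix{R}$ from~\eqref{eqn:change-of-basis}, which I first justify. By~\eqref{eqn:chart-tangency}, every column of $\Dphi$ lies in $\Kernel{\Dh} = \Image{\KernelBasis}$. Hence $\Dphi = \KernelBasis\,\Matrix{C}$ for some $\Matrix{C} \in \Real^{d \times d}$. Multiplying on the left by $\Transpose{\KernelBasis}$ and using $\Transpose{\KernelBasis}\KernelBasis = \Identity_{d}$ gives $\Matrix{C} = \Transpose{\KernelBasis}\,\Dphi = \Matrix{R}$. Equivalently, $\KernelBasis\Transpose{\KernelBasis}$ is the Euclidean orthogonal projector onto $\Kernel{\Dh}$, and it fixes every column of $\Dphi$.

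Invertibility comes next. Suppose $\Matrix{R}\,x = \Vector{0}$. Then $\Dphi\,x = \KernelBasis\,\Matrix{R}\,x = \Vector{0}$, and since $\Dphi$ has full column rank (the parameterization is an immersion, as assumed in Section~\ref{subsec:explicit}), this forces $x = \Vector{0}$. So the square matrix $\Matrix{R}$ is invertible.

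The identity~\eqref{eqn:equivalence} is then a direct substitution into~\eqref{eqn:pullback-metric}:
\[
\PullbackMetricAt{u} = \Transpose{\Dphi}\,\MetricAt{q}\,\Dphi = \Transpose{\Matrix{R}}\,\Transpose{\KernelBasis}\,\MetricAt{q}\,\KernelBasis\,\Matrix{R} = \Transpose{\Matrix{R}}\RestrictedMetric{q}\,\Matrix{R}.
\]
Note that this uses nothing about $\MetricAt{q}$ beyond its being a fixed matrix at $q$, so it holds for any ambient Riemannian metric.

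For the geometric conclusion, take any $v, w \in \TqQ$. Write $v = \Dphi\,a = \KernelBasis\,\xi$ with $\xi = \Matrix{R}\,a$, and likewise $w = \Dphi\,b = \KernelBasis\,\eta$ with $\eta = \Matrix{R}\,b$. Then
\[
\Transpose{a}\,\PullbackMetricAt{u}\,b = \Transpose{\xi}\,\RestrictedMetric{q}\,\eta = \Transpose{v}\,\MetricAt{q}\,w .
\]
So both matrices represent the single inner product $\inner{\cdot}{\cdot}{q}$ restricted to $\TqQ$, each in its own coordinates.

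Lengths follow by applying this pointwise along a curve $\gamma = \Chart \circ c$ in~\eqref{eqn:arc-length}. Angles follow because they are ratios of inner products. Geodesics follow because they are determined by the length or energy functional~\eqref{eqn:energy}, which is now identical under both descriptions.

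The only step needing care is the first one, namely that $\Dphi$ factors through $\KernelBasis$. This rests on~\eqref{eqn:chart-tangency} and on the full-rank assumption on $\Dphi$, both of which the paper has already established. Everything after that is routine algebra.
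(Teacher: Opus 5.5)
Your proposal is correct and follows the paper's proof essentially step for step: you factor $\Dphi = \KernelBasis\,\Matrix{R}$ via~\eqref{eqn:chart-tangency}, deduce that $\Matrix{R}$ is invertible from the full column rank of $\Dphi$, substitute into~\eqref{eqn:pullback-metric}, and check that both matrices represent $\MetricAt{q}$ restricted to $\TqQ$. The only cosmetic difference is that you verify the bilinear identity $\Transpose{a}\,\PullbackMetricAt{u}\,b = \Transpose{v}\,\MetricAt{q}\,w$ directly, whereas the paper checks squared norms and then appeals to polarization.
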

\begin{proof}
By~\eqref{eqn:chart-tangency}, the columns of $\Dphi$ and the columns of $\KernelBasis$ span the same subspace $\TqQ$, and both matrices have full column rank $d$.
Since $\Dphi = \KernelBasis\,\Matrix{R}$ has rank $d$, the $d \times d$ matrix $\Matrix{R}$ has rank $d$ as well and is therefore invertible.
Substituting $\Dphi = \KernelBasis\,\Matrix{R}$ into the pullback metric~\eqref{eqn:pullback-metric} gives
\begin{equation*}
\PullbackMetricAt{u}
= \Transpose{\Dphi}\,\MetricAt{q}\,\Dphi
= \Transpose{\Matrix{R}}\bigl( \Transpose{\KernelBasis}\,\MetricAt{q}\,\KernelBasis \bigr)\Matrix{R}
= \Transpose{\Matrix{R}}\,\RestrictedMetric{q}\,\Matrix{R}.
\end{equation*}
Let $v \in \TqQ$ be a tangent vector with coordinates $\dot u \in \Real^{d}$ in the basis $\Dphi$ and coordinates $\xi \in \Real^{d}$ in the basis $\KernelBasis$, so that $v = \Dphi\,\dot u = \KernelBasis\,\xi$.
Since $\KernelBasis$ has full column rank, $\KernelBasis\,\xi = \KernelBasis\,\Matrix{R}\,\dot u$ implies $\xi = \Matrix{R}\,\dot u$, and the squared length of $v$ is therefore the same in either basis,
\begin{equation*}
\Transpose{\dot u}\PullbackMetricAt{u}\dot u
= \Transpose{(\Matrix{R}\dot u)}\RestrictedMetric{q}(\Matrix{R}\dot u)
= \Transpose{\xi}\RestrictedMetric{q}\,\xi
= \Transpose{v}\MetricAt{q}\,v .
\end{equation*}
The two metrics give every tangent vector the same length, and since an inner product is determined by the squared norm it induces, they define the same inner product on $\TqQ$.
\end{proof}

\looseness=-1
The relation in~\eqref{eqn:equivalence} is precisely the $(0,2)$-tensor transformation law of Section~\ref{subsec:riemannian}, applied to the change of basis in~\eqref{eqn:change-of-basis}.
The restricted metric $\RestrictedMetric{q}$ and the pullback metric $\PullbackMetricAt{u}$ are therefore the same tensor on the constraint submanifold, written by the implicit representation in an orthonormal basis of $\Kernel{\Dh}$ and by the explicit representation in the basis given by the columns of $\Dphi$.
Since the ambient metric $\MetricAt{q}$ enters~\eqref{eqn:restricted-metric} and~\eqref{eqn:pullback-metric} in the same place, Theorem~\ref{thm:equivalence} holds for \emph{any} positive definite ambient metric.
Existing constrained planners fix a constant ambient metric in advance and measure motion with it throughout the search~\cite{jaillet2012path,kim2016tangent,kingston2019exploring,cohn2024constrained}.
Our construction generalizes these approaches by supporting both constant and smoothly varying ambient metrics, so that the induced geometry captures costs that vary smoothly with configuration.

\section{Planning with the Induced Metric}
\label{sec:planning}

We now present how existing motion planning algorithms can make use of the induced metric without requiring any change to the search strategies on which they are built.
We first describe the subroutines of sampling-based planners that depend on the intrinsic geometry of the constraint submanifold, and show how each of them is realized under the induced metric (Section~\ref{subsec:planning-sbp}).
We then describe the corresponding treatment for trajectory optimization (Section~\ref{subsec:planning-trajopt}).

\subsection{Sampling-Based Planning}
\label{subsec:planning-sbp}

\looseness=-1
Sampling-based motion planning algorithms construct a tree or a graph by repeatedly sampling random configurations,
connecting nearby ones through a local planning procedure,
and comparing the resulting candidate paths according to a cost function~\cite{karaman2011sampling}.
All three operations depend on the geometry of the space in which the planner searches, and on a constraint submanifold, each of them must therefore respect the intrinsic geometry that the submanifold inherits from the ambient configuration space~\cite{kingston2018sampling}.
Since an asymptotically optimal planner converges to the optimal path with respect to the cost function that it optimizes, the planner recovers the solution of~\eqref{eqn:problem} precisely when this cost function is the induced Riemannian length.

\textit{Sampling.}
Uniform samples generated in the ambient configuration space lie on the constraint submanifold with probability zero, so a planner that relies on the implicit representation cannot obtain feasible configurations by sampling alone.
We therefore project the sample back onto $\Cfg$ with $\Pi_{\Cfg}$, and reject it whenever the Newton iteration in~\eqref{eqn:projection} fails to converge~\cite{berenson2009manipulation,stilman2010global}.
This projected sample comes either from the ambient manifold, as in projection-based methods, or from the tangent space at a configuration already on $\Cfg$, as in continuation-based methods~\cite{jaillet2012path,kim2016tangent}.
The explicit representation avoids these steps, since sampling the parameter domain $\ChartDomain$ already produces configurations that satisfy the constraint by construction.

\looseness=-1
\textit{Local planning.}
Connecting two configurations on the constraint submanifold requires a curve that remains feasible along its entire length and follows the induced geometry rather than the geometry of the ambient manifold.
Existing constrained planners traverse the submanifold using local parameterizations of the tangent space~\cite{jaillet2012path,kim2016tangent,kingston2019exploring}, but they measure the resulting motion with a constant ambient metric and therefore do not account for the smooth variation of the induced metric across the submanifold.
We keep the curve that each representation already provides and change how we measure it.
The explicit parameterization approach connects two parameters with a straight chord in $\ChartDomain$, which maps to a feasible curve on $\Cfg$ by construction, and the implicit representation projects the ambient chord back onto the level set with $\Pi_{\Cfg}$.
Both curves stay on $\Cfg$ along their entire length, so the induced metric enters through the cost of an edge only, which we describe next.
A local planner could instead trace a discrete geodesic of the induced metric by stepping along the Riemannian gradient of a squared distance potential~\cite{kyaw2026geometry}.
Such a curve follows the geometry more closely, but it evaluates the induced metric once per step rather than once per edge, and we keep the cheaper connection so that an anytime planner can spend its budget more on generating samples and rewiring tree edges.

\textit{Cost evaluation.}
Nearest-neighbour queries, edge costs, and rewiring operations all require the induced distance between two configurations.
Computing it exactly, however, requires solving a geodesic boundary value problem, which is impractical within the inner loop of a planner.
We therefore adopt the midpoint approximation of~\cite{kyaw2026geometry},
\begin{equation}
\label{eqn:midpoint-distance}
\begin{split}
\qmid &= \R_{q_x}\bigl( \tfrac{1}{2}\,\R^{-1}_{q_x}(q_y) \bigr), \\
\hat{c}(q_x, q_y) &= \Norm{ \R^{-1}_{\qmid}(q_y) - \R^{-1}_{\qmid}(q_x) }_{\qmid},
\end{split}
\end{equation}
where $\R^{-1}_{q}$ is the inverse retraction, defined for configurations close enough to $q$, and $\Norm{\cdot}_{\qmid}$ is the norm of the induced metric $\RestrictedMetric{\qmid}$.
In the implicit representation, we evaluate $\R^{-1}_{q}$ at a nearby configuration $q'$ by projecting the ambient difference onto the tangent space, $v \leftarrow (\Identity - \Pseudo{\Dh}\Dh)(q' - q)$, and then repeating the correction $v \leftarrow v + (\Identity - \Pseudo{\Dh}\Dh)\bigl(q' - \R_{q}(v)\bigr)$ until $\R_{q}(v)$ reaches $q'$.
In the explicit representation, the inverse retraction is the difference $u' - u$ of the two parameters.
The approximation evaluates the induced metric once, at the retraction midpoint $\qmid$, and agrees with the true induced distance to third order in the separation between the two configurations.
Measuring an edge in this way, rather than treating the ambient manifold or the parameter domain as flat, ensures that the planner minimizes the induced Riemannian length in~\eqref{eqn:problem}.
Theorem~\ref{thm:equivalence} further guarantees that this cost is identical under the two representations, so the two pose the
same optimization problem.
An asymptotically optimal planner therefore converges to the same optimal cost in either representation, while the particular path it returns depends on its random samples and on whether the optimal path is unique.

\subsection{Trajectory Optimization}
\label{subsec:planning-trajopt}

In contrast to sampling-based motion planning algorithms, trajectory optimization approaches use decision variables in a mathematical program to represent the robot's trajectory~\cite[\S 6.2]{russtedrake2024manipulation}.
An objective function (e.g. path length) and various constraints (e.g. collision avoidance) are formulated in terms of the decision variables, and then an optimization algorithm is used to find a solution.
Trajectory optimization inherently provides extensive modeling freedom, and can be applied directly to constrained motion planning by simply imposing the relevant constraint along the trajectory~\cite{bonalli2019trajectory}.
But this modeling freedom comes with a lack of guarantees: most constraints of interest in robotics yield a nonconvex feasible set, making it challenging to obtain feasible solutions.
In many cases, the success of an optimization problem is heavily dependent on the initial guess, and the use of a (suboptimal) feasible path from a sampling-based planner is a common technique.

\looseness=-1
In this paper, we implement trajectory optimization for the explicit representation; the trajectory is represented in the parameterization's minimal coordinates, and is lifted to configuration space in order to apply costs and constraints.
We use the Drake's KinematicTrajectoryOptimization implementation
, which uses a B-spline representation of the path $\gamma(s)=\sum_{i=0}^N B_i(s)P_i$, where $B_i$ is the $i$th basis function and $P_i$ is the $i$th control point.
We impose collision avoidance constraints, joint limits, and reachability constraints at finitely many points $\{\gamma(s_j)\}_{j=1}^{100}$.
(In a leader-follower parameterization, the leader arm's joint limits are imposed directly as bounding box constraints on the control points.)
The start and end points are also fixed.

For Riemannian metrics defined in C-space, our cost function is an approximation of the Riemannian path energy \eqref{eqn:energy}.
We use energy instead of path length \eqref{eqn:arc-length} as it encourages equal spacing of the control points, making the pointwise constraints more reliable.
(Path length has other negative properties, like being non-smooth if adjacent control points are equal.)
For control points $P$, we compute
\begin{equation}
    E(P)=\lambda\sum_{i=1}^N \Transpose{(\varphi(P_i)-\varphi(P_{i-1}))}\MetricAt{\varphi(\bar P_i)}(\varphi(P_i)-\varphi(P_{i-1})),
\end{equation}
where $\bar P_i=\frac{P_{i}+P_{i-1}}{2}$ is the midpoint in parameterized space.
Note that we lift to a chord in C-space and evaluate the metric there, rather than pulling it back to the parameterized space.
Computing the metric in chart coordinates requires the Jacobian, so gradient-based optimization would need second derivatives of the parameterization.
$\lambda$ is a scale factor, set so that the initial guess has a cost of $1$ in order to ensure the solver's optimality tolerance is consistent across trials and choice of metric.
For the flat metric, the cost function is simply the scaled path energy in parameterized space $E(P)=\lambda\sum_{i=1}^N \|P_i-P_{i-1}\|^2$.

We obtain an initial guess as a trajectory $\gamma_0:[0,1]\to\mathcal{U}$ in the parameterized space that satisfies all constraints, and time-parameterize it by arc length in the flat metric.
The initial value of $P_i$ is set to $\gamma_0(i/N)$, yielding an initial B-spline that approximately fits $\gamma_0$.
This initial guess is often still feasible, and if not, it may still be close enough to the feasible set that the optimizer can recover.
Trajectory timing is determined after solving the optimization problem using TOPP-RA \cite{pham2018new}.
\section{Experiments}
\label{sec:experiments}

\looseness=-1
We evaluate our approach on bimanual manipulation problems under a loop-closure constraint, where two 7-DoF Franka arms hold a book rigidly and move it between the compartments of a shelf (Figure~\ref{fig:teaser}).
The book starts in one of three shelf compartments, bottom (B), middle (M) and top (T), and has to reach another, which gives six problems in total.
For the implicit representation, we follow the projection-based approach of CBiRRT~\cite{berenson2009manipulation}, and for the explicit representation, we follow the analytic inverse kinematics parameterization of~\cite{cohn2024constrained}, whose differential is available in closed form~\cite{cohn2026planning}.
We consider two ambient metrics, the constant Euclidean metric and the configuration-dependent kinetic energy metric derived from the mass matrix of the two arms, so that moving the heavier joints costs more under the induced geometry.
We also include a flat objective on the parameter domain, which ignores the induced geometry and distorts distances once a path is lifted back to the original space~\cite{garg2025planning}.
All experiments minimize the induced Riemannian length in~\eqref{eqn:arc-length} under the given metric, using the midpoint approximation~\eqref{eqn:midpoint-distance}.

We use an OMPL implementation of sampling-based planner G-RRT*~\cite{kyaw2024greedy}, together with the trajectory optimizer of Section~\ref{subsec:planning-trajopt}.
G-RRT* runs anytime under a budget of 10\,s, using the matrix-valued lower bound heuristic of~\cite{kyaw2026direct} for informed sampling, with VAMP's collision checking backend~\cite{thomason2024motions}, and its paths are post-processed with randomized shortcutting and L-BFGS smoothing.
The trajectory optimizer instead solves a sequential quadratic program with SNOPT through Drake.
All results are reported after applying arclength reparameterization under the respective metric.
For each problem, we run 50 trials with different pseudorandom seeds and report the median.

\subsection{Empirical Studies}

\looseness=-1
We first validate Theorem~\ref{thm:equivalence} empirically.
For the solution paths found for each of the six problems under the three metrics, we measure the induced length of each path twice, once with the pullback metric in the parameter domain and once with the induced metric in the implicit representation.
The explicit length uses the full joint velocity produced by a step in the parameter domain, while the implicit length uses only its component in the tangent space of the constraint manifold, where the induced metric is defined.
A difference between the two therefore would indicate a velocity that leaves the constraint manifold.
We also sample 10,000 random configurations in the parameter domain and compute the eigenvalue ratio of the induced metric at each one since the solution paths cover only part of $\Cfg$.

\looseness=-1
Table~\ref{tab:equivalence} shows that the two lengths match under both metrics.
Every joint velocity the parameterization produces therefore lies in the tangent space of the constraint manifold, and shortening a path in the parameter domain shortens the physical motion.
The last row shows that the parameterization is far from an isometry even under the Euclidean ambient metric, and the kinetic energy metric raises the ratio further.
A planner that treats the parameterization domain as flat therefore optimizes the wrong functional under either ambient metric.
We measure what this costs in Section~\ref{subsec:exp-planning}.

\begin{figure*}[!t]
\centering
\vspace{-2mm}
\includegraphics[width=0.9\textwidth]{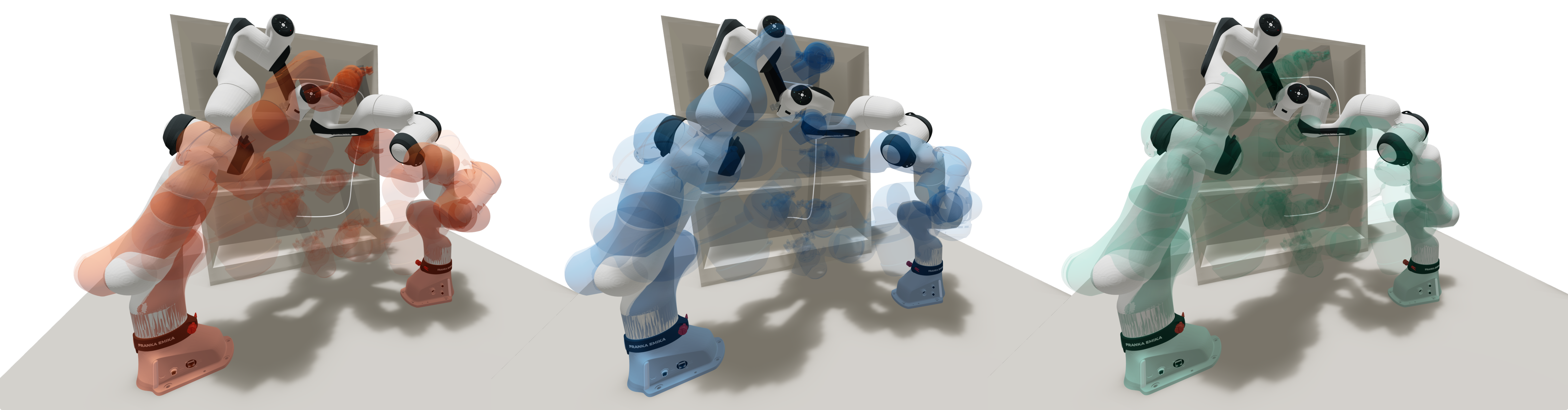}
\caption{
Motions produced by the trajectory optimizer in the explicit representation for the B$\rightarrow$T problem of the bimanual experiment described in Section~\ref{sec:experiments}.
Each panel overlays the two arms at intermediate configurations, with the white curve tracing the book's path.
The flat metric measures length in the coordinates of the parameter domain and gives the longest motion of the three (orange, left), while the induced Euclidean metric on the constraint manifold takes the shortest route (blue, middle).
The kinetic-energy metric weights the motion by the mass matrix, so the heavier joints of both arms scarcely move (green, right).
}
\label{fig:bimanual-plans}
\vspace{-2mm}
\end{figure*}

\begin{table}[!t]
\centering
\caption{
Induced Riemannian length of the same solution paths, measured once in each representation and summed over the six problems of Section~\ref{sec:experiments}.
Rows name the metric the planner minimized and each column pair the metric we measure under.
The last row reports the median eigenvalue ratio of the pullback metric over 10,000 sampled configurations.
}
\label{tab:equivalence}
\footnotesize
\setlength{\tabcolsep}{4pt}
\begin{tabular}{lrrrr}
\toprule
& \multicolumn{4}{c}{Measured under} \\
\cmidrule(lr){2-5}
& \multicolumn{2}{c}{Euclidean} & \multicolumn{2}{c}{kinetic-energy} \\
\cmidrule(lr){2-3} \cmidrule(lr){4-5}
Planned under & explicit & implicit & explicit & implicit \\
\midrule
flat & 28.80 & 28.80 & 17.70 & 17.70 \\
Euclidean & \textbf{27.25} & \textbf{27.25} & 18.01 & 18.01 \\
kinetic-energy & 32.93 & 32.93 & \textbf{15.77} & \textbf{15.77} \\
\cmidrule(lr){1-5}
Total & 88.97 & 88.97 & 51.49 & 51.49 \\
\midrule
\midrule
Eigenvalue ratio & \multicolumn{2}{c}{56} & \multicolumn{2}{c}{$5.2 \times 10^{2}$} \\
\bottomrule
\end{tabular}
\vspace{-3mm}
\end{table}

\begin{table*}[!t]
\centering
\caption{
Planning and trajectory optimization results over 50 trials on the six bimanual problems described in Section~\ref{sec:experiments}.
Rows correspond to the metric that each solver minimizes, grouped by representation and by solver, and columns to the six problems under the induced Euclidean length (left) and the induced kinetic-energy length (right).
We report median induced Riemannian lengths, with planner motions taken after smoothing and the optimizer seeded from them.
}
\label{tab:planning}
\footnotesize
\setlength{\tabcolsep}{2.4pt}
\renewcommand{\arraystretch}{1.1}
\begin{tabular}{lrrrrrr@{\hspace{1.2em}}rrrrrr}
\toprule
& \multicolumn{6}{c}{Induced Euclidean length} & \multicolumn{6}{c}{Induced kinetic-energy length} \\
\cmidrule(lr){2-7} \cmidrule(lr){8-13}
\hspace{1.1em}Metric & B$\rightarrow$M & B$\rightarrow$T & M$\rightarrow$B & M$\rightarrow$T & T$\rightarrow$B & T$\rightarrow$M & B$\rightarrow$M & B$\rightarrow$T & M$\rightarrow$B & M$\rightarrow$T & T$\rightarrow$B & T$\rightarrow$M \\
\midrule
\multicolumn{13}{l}{\textbf{Implicit representation}} \\
\rowcolor{gray!15}[\tabcolsep][\tabcolsep]
\multicolumn{13}{l}{\textit{Sampling-based planner, 10\,s anytime budget}} \\
\hspace{1.1em}Euclidean & \textbf{4.08} & \textbf{5.79} & \textbf{4.06} & \textbf{4.70} & \textbf{5.78} & \textbf{4.70} & \textbf{2.63} & \textbf{3.48} & 2.66 & \textbf{2.62} & 3.55 & 2.71 \\
\hspace{1.1em}kinetic-energy & 4.29 & 6.05 & 4.27 & 5.05 & 6.02 & 4.96 & 2.67 & 3.49 & \textbf{2.65} & 2.66 & \textbf{3.47} & \textbf{2.62} \\
\midrule\midrule
\multicolumn{13}{l}{\textbf{Explicit representation}} \\
\rowcolor{gray!15}[\tabcolsep][\tabcolsep]
\multicolumn{13}{l}{\textit{Sampling-based planner, 10\,s anytime budget}} \\
\hspace{1.1em}flat & 4.48 & 6.22 & 4.54 & 5.01 & 6.38 & 4.93 & 2.78 & 3.81 & 2.82 & 2.82 & 3.87 & 2.76 \\
\hspace{1.1em}Euclidean & \textbf{3.90} & \textbf{5.85} & \textbf{3.90} & \textbf{4.60} & \textbf{5.88} & \textbf{4.58} & 2.66 & 3.68 & 2.67 & 2.74 & 3.70 & 2.72 \\
\hspace{1.1em}kinetic-energy & 4.76 & 6.95 & 4.73 & 5.97 & 6.94 & 5.94 & \textbf{2.54} & \textbf{3.60} & \textbf{2.53} & \textbf{2.44} & \textbf{3.66} & \textbf{2.37} \\
\midrule
\rowcolor{gray!15}[\tabcolsep][\tabcolsep]
\multicolumn{13}{l}{\textit{Trajectory optimizer, seeded from the explicit motions}} \\
\hspace{1.1em}flat & 4.76 & 7.01 & 4.76 & 5.46 & 7.01 & 5.46 & 2.83 & 3.98 & 2.82 & 2.93 & 3.98 & 2.93 \\
\hspace{1.1em}Euclidean & \textbf{3.57} & \textbf{5.21} & \textbf{3.57} & \textbf{4.18} & \textbf{5.21} & \textbf{4.18} & 2.48 & 3.27 & 2.48 & 2.49 & 3.27 & 2.48 \\
\hspace{1.1em}kinetic-energy & 4.66 & 6.43 & 4.66 & 5.62 & 6.42 & 5.62 & \textbf{2.05} & \textbf{2.64} & \textbf{2.05} & \textbf{1.63} & \textbf{2.64} & \textbf{1.63} \\
\midrule
\rowcolor{gray!15}[\tabcolsep][\tabcolsep]
\multicolumn{13}{l}{\textit{Trajectory optimizer, seeded from the implicit motions}} \\
\hspace{1.1em}Euclidean & \textbf{3.57} & \textbf{5.21} & \textbf{3.57} & \textbf{4.18} & \textbf{5.21} & \textbf{4.18} & 2.48 & 3.26 & 2.48 & 2.48 & 3.27 & 2.48 \\
\hspace{1.1em}kinetic-energy & 4.65 & 6.42 & 4.65 & 5.62 & 6.42 & 5.62 & \textbf{2.05} & \textbf{2.64} & \textbf{2.05} & \textbf{1.63} & \textbf{2.64} & \textbf{1.63} \\
\bottomrule
\end{tabular}
\vspace{-3mm}
\end{table*}

\begin{table}[!t]
\centering
\caption{
Median solve time over 50 trials for the trajectory optimizer in the explicit representation.
Rows correspond to the metric that produced the initial guess and columns to the metric that the optimizer minimizes.
}
\label{tab:compatibility}
\footnotesize
\setlength{\tabcolsep}{3pt}
\renewcommand{\arraystretch}{1.1}
\begin{tabular}{lrrr}
\toprule
& \multicolumn{3}{c}{Time (s) $\downarrow$} \\
\cmidrule(lr){2-4}
Initial guess & flat & Euclidean & kinetic-energy \\
\midrule
flat & \textbf{9.2} & 55.2 & 92.8 \\
Euclidean & 9.9 & \textbf{41.8} & 80.5 \\
kinetic-energy & 10.9 & 44.3 & \textbf{78.8} \\
\bottomrule
\end{tabular}
\vspace{-4mm}
\end{table}

\subsection{Constrained Bimanual Manipulation Problems}
\label{subsec:exp-planning}

We run the sampling-based planner under both implicit and explicit representations of the constraint manifold,
and then seed the trajectory optimizer with every solution, which we restrict it to only run in the parameter domain of the explicit representation as described in Section~\ref{subsec:planning-trajopt}.
We measure every resulting path under both induced lengths and report them in Table~\ref{tab:planning}, so that we can compare all of the metrics under each.

We find that every planner variant reaches the lowest path cost under the respective metric that it plans with, on every problem and in both representations.
The flat metric produces the longest motions of the three under both induced lengths, since it measures a path in the coordinates of the parameter domain and not on the constraint manifold itself.
Pulling an ambient metric back into the parameter domain corrects this distortion, and the planner then shortens the physical motion itself (Figure~\ref{fig:bimanual-plans}).
The two induced metrics rank the same motions differently, so the ambient metric we choose decides which motion the planner returns.

\looseness=-1
The choice of metric also affects how quickly the trajectory optimizer converges.
We run the optimizer under every metric from every initial guess, and Table~\ref{tab:compatibility} reports the median solve time.
The optimizer converges fastest on most problems when its initial guess comes from the planner that used the same metric.
The guess from the flat metric is the slowest of the three under both induced metrics on every problem, since it starts the optimizer from a path that is already long under the length being minimized.
The optimized motions themselves are the same regardless of the initial guess, and only the solve time differs.

\subsection{Execution on the Real Bimanual Setup}

We run the planned motions on the two arms of Figure~\ref{fig:teaser}, which hold the book with the grasp that our constraint map describes.
To obtain the plans, we seed trajectory optimization with a Euclidean-metric plan obtained from G-RRT*.
Then, each motion is resampled into waypoints that are uniform in induced arclength and time-parameterized with TOPP-RA under the Frankas' velocity and acceleration limits~\cite{pham2018new}.
The left arm tracks the trajectory under joint position control while the right arm runs under compliance control, as internal forces may build up due to differences between the arms' idealized kinematics and the real kinematics.
We recorded the commanded and the measured joint angles together with the measured joint velocities  at 1\,kHz, and computed the trajectory timing, joint-space path length, and kinetic energy integrated over time. 

Each task runs five times under each objective, and Table~\ref{tab:hardware} reports the median for each evaluation metric. 
Similar to our simulation experiments, our real experiments show that each induced metric optimizes the measurement analogue of its path cost. 
The Euclidean metric obtains the shortest joint-space path lengths, while also generating the fastest plans; kinetic-energy plans taking 17$\%$ to 30$\%$ longer than Euclidean.
Nonetheless, the kinetic-energy metric generates the plans with the lowest integrated kinetic energy for every task.

\begin{table*}[!t]
\centering
\caption{Planning and trajectory optimization results on our real experimental setup. Each motion runs for five trials, and we report the median of trajectory duration, measured joint-space path length, and integrated kinetic energy.}
\label{tab:hardware}
\scriptsize
\setlength{\tabcolsep}{1.5pt}
\renewcommand{\arraystretch}{1.1}
\begin{tabular}{l rrrrrr @{\hspace{0.6em}} rrrrrr @{\hspace{0.6em}} rrrrrr}
\toprule
& \multicolumn{6}{c}{Executed duration [s]} & \multicolumn{6}{c}{Joint-space path length [rad]} & \multicolumn{6}{c}{Integrated kinetic energy [J\,s]} \\
\cmidrule(lr){2-7} \cmidrule(lr){8-13} \cmidrule(lr){14-19}
Metric & B$\to$M & B$\to$T & M$\to$B & M$\to$T & T$\to$B & T$\to$M & B$\to$M & B$\to$T & M$\to$B & M$\to$T & T$\to$B & T$\to$M & B$\to$M & B$\to$T & M$\to$B & M$\to$T & T$\to$B & T$\to$M \\
\midrule
flat & 14.11 & 20.97 & 14.12 & 12.89 & 20.97 & 12.94 & 6.64 & 9.56 & 6.64 & 7.33 & 9.56 & 7.33 & 0.281 & 0.382 & 0.281 & 0.295 & 0.382 & 0.295 \\
Euclidean & \textbf{12.19} & \textbf{12.63} & \textbf{12.19} & \textbf{10.20} & \textbf{12.60} & \textbf{10.24} & \textbf{5.03} & \textbf{7.38} & \textbf{5.04} & \textbf{5.91} & \textbf{7.38} & \textbf{5.91} & 0.266 & 0.443 & 0.266 & 0.309 & 0.444 & 0.309 \\
kinetic-energy & 14.33 & 14.81 & 14.33 & 13.23 & 14.75 & 13.25 & 6.60 & 9.06 & 6.60 & 7.94 & 9.06 & 7.93 & \textbf{0.158} & \textbf{0.320} & \textbf{0.158} & \textbf{0.124} & \textbf{0.320} & \textbf{0.124} \\
\bottomrule
\end{tabular}
\vspace{-4mm}
\end{table*}
\vspace{-2mm}
\section{Conclusion}
\label{sec:conclusion}

This work treats motion planning with constraints as a search over a Riemannian submanifold that inherits its metric from the ambient configuration space.
Our main result is that both the implicit and explicit representations induce the same metric, related by an invertible change of basis, for any ambient Riemannian metric.
The two representations therefore describe one geometry, which lets a planner choose its metric and its representation independently.
We further show how a sampling-based planner and a trajectory optimizer use this induced metric without any change to the search strategies on which they are built, so that both minimize the same Riemannian length functional on the constraint submanifold.
Across six bimanual manipulation problems with two Franka arms under a loop-closure constraint, each planner returns the shortest motion under the metric that it plans with in both representations, while a planner that treats the parameter domain as flat gives longer motions under both induced Riemannian lengths.

The retraction we use projects along the Euclidean normal, which makes it second order under the Euclidean metric and first order under a general one.
We leave a metric-weighted projection, which would recover second-order accuracy for any ambient metric, as future work.
We are also interested in trajectory optimization directly in the implicit representation~\cite{bordalba2022direct} under induced metrics, where the decision variables would lie on the level set rather than in a parameter domain.

\section*{Acknowledgements}
\looseness=-1
This work was supported in part by the Canada Research Chairs Program, the Natural Sciences and Engineering Research Council of Canada (NSERC) through Discovery Grant no. RGPIN-2023-05036, the National Science Foundation Graduate Research Fellowship Program under Grant No. 2141064, and the Secretaría de Ciencia, Humanidades, Tecnología e Innovación (SECIHTI), Mexico, under CVU number 1066015. Any opinions, findings, and conclusions or recommendations expressed in this material are those of the author(s) and do not necessarily reflect the views of the National Science Foundation.

\bibliographystyle{IEEEtran}
\bibliography{IEEEabrv, main}

\end{document}